\renewcommand{\phi}{\varphi}
\renewcommand{\P}{\mathbb{P}}
\newcommand{\E}{\mathbb{E}}
\newcommand{\LG}{\overline{\log}(K)}
\newcommand{\LGG}{\overline{\log}(M K)}
\newcommand{\cO}{\mathcal{O}}
\def\ds1{\mathds{1}}
\renewcommand{\epsilon}{\varepsilon}
\newcommand{\wh}{\widehat}
\newcommand{\argmax}{\mathop{\mathrm{argmax}}}
\renewcommand{\tilde}{\widetilde}
\newlength{\minipagewidth}
\newcommand{\bookbox}[1]{
\par\medskip\noindent
\framebox[\textwidth]{
\begin{minipage}{\minipagewidth}
{#1}
\end{minipage} } \par\medskip }
\newcommand{\beq}{\begin{equation}}
\newcommand{\eeq}{\end{equation}}
\newcommand{\beqa}{\begin{eqnarray}}
\newcommand{\eeqa}{\end{eqnarray}}
\newcommand{\beqan}{\begin{eqnarray*}}
\newcommand{\eeqan}{\end{eqnarray*}}
\def\ba#1\ea{\begin{align*}#1\end{align*}} 
\def\banum#1\eanum{\begin{align}#1\end{align}} 
\newtheorem{theorem}{Theorem}
\newcommand{\BlackBox}{\rule{1.5ex}{1.5ex}}  
\newenvironment{proof}{\par\noindent{\bf Proof\ }}{\hfill\BlackBox\\[2mm]}
\begin{document}

\title{Multiple Identifications in Multi-Armed Bandits}
\author{
S{\'e}bastien Bubeck \\
Department of Operations Research and Financial Engineering, \\
Princeton University \\
{\tt sbubeck@princeton.edu} \\ \\
Tengyao Wang \\
Department of Mathematics, \\
Princeton University \\\
{\tt tengyaow@princeton.edu}  \\ \\
Nitin Viswanathan \\
Department of Computer Science, \\
Princeton University \\
{\tt nviswana@princeton.edu} 
}

\date{\today}

\maketitle

\begin{abstract}
We study the problem of identifying the top $m$ arms in a multi-armed bandit game. Our proposed solution relies on a new algorithm based on successive rejects of the seemingly bad arms, and successive accepts of the good ones. This algorithmic contribution allows to tackle other multiple identifications settings that were previously out of reach. In particular we show that this idea of successive accepts and rejects applies to the multi-bandit best arm identification problem.
\end{abstract}

\section{Introduction} \label{sec:intro}
We are interested in the following situation: An agent faces $K$ unknown distributions, and he is allowed to do $n$ sequential evaluations of the form $(i,X)$ where $i \in \{1, \hdots, K\}$ is chosen by the agent and $X$ is a random variable drawn from the $i^{th}$ distribution and revealed to the agent. The goal of the agent after the $n$ evaluations is to identify a subset of the distributions (or \emph{arms} in the multi-armed bandit terminology) corresponding to some prespecified criterion. This setting was introduced in \cite{BMS09}, where the goal was to identify the distribution with maximal mean. Note that in this formulation of the problem the evaluation budget $n$ is fixed. Another possible formulation is the one of the PAC model studied in \cite{EMM02,MT04} where there is an accuracy of $\epsilon$ and a probability of correctness $\delta$ that are prespecified, and one wants to minimize the number of evaluations to attain this prespecified accuracy and probability of correctness. This latter formulation has a long history which goes back to the seminal work \cite{Bec54}. In this paper we focus on the fixed budget setting of \cite{BMS09}. For this fixed budget problem, \cite{ABM10} proposed a new analysis and an optimal algorithm (up to a logarithmic factor). In particular this work introduced a notion of \emph {best arm identification complexity}, and it was shown that this quantity, denoted $H$, characterizes the hardness of identifying the best distribution in a specific set of $K$ distributions. Intuitively, it was shown that the number of evaluations $n$ has to be $\Omega( H / \log K )$ to be able to find the best arm, and the algorithm SR (Successive Rejects) finds it with $\cO( H \log^2 K )$ evaluations. Furthermore in the latter paper the authors also suggested the open problem of generalizing the analysis and algorithms to the identification of the $m$ distributions with the top $m$ means. Our main contribution is to solve this open problem. We suggest a non-trivial extension of the complexity $H$, denoted $H^{\langle m \rangle}$, to the problem of identifying the top $m$ distributions, and we introduce a new algorithm, called SAR (Successive Accepts and Rejects), that requires only $\tilde{\cO}\left(H^{\langle m \rangle}\right)$\footnote{In the $m$-best arms identification problem we write $u_n = \tilde{\cO}(v_n)$ when $u_n = \cO(v_n)$ up to logarithmic factor in $K$} evaluations to find the top $m$ arms. We also propose a numerical comparison between SAR, SR and uniform sampling for the problem of finding the $m$ top arms. Interestingly the experiments show that SR performs badly for $m>1$, which shows that the tradeoffs involved in this generalized problem are fundamentally different from the ones for the single best arm identification.
\newline

As a by-product of our new analysis we are also able to solve an open problem of \cite{GGLB11}. In this paper the authors studied the setting where the agent faces $M$ distinct best arm identification problems. A multi-bandit identification complexity was introduced, that we denote $H^{[M]}$. On the contrary to the setting of single best arm identification, here the algorithm proposed in \cite{GGLB11} that needs of order of $H^{[M]}$ evaluations to find the best arm in each bandit requires to know the complexity $H^{[M]}$ to tune its parameters. Using our SAR machinery, we construct a parameter-free algorithm that identify the best arm in each bandit with $\tilde{\cO}\left(H^{[M]}\right)$\footnote{In the multi-bandit best arm identification problem we write $u_n = \tilde{\cO}(v_n)$ when $u_n = \cO(v_n)$ up to logarithmic factor in $M K$} evaluations.
\newline

Both the $m$-best arms identification and the multi-bandit best arm identification have numerous potential applications. We refer the interested reader to the previously cited papers for several examples.

\section{Problem setup} \label{sec:setup}
We adopt the terminology of multi-armed bandits. The agent faces $K$ arms and he has a budget of $n$ evaluations (or \emph{pulls}). To each arm $i \in \{1, \hdots, K\}$ there is an associated probability distribution $\nu_i$, supported\footnote{One can directly generalize the discussion to $\sigma$-subgaussian distributions.} on $[0,1]$. These distributions are unknown to the agent. The sequential evaluations protocol goes as follows: at each round $t = 1, \hdots, n$, the agent chooses an arm $I_t$, and observes a reward drawn from $\nu_{I_t}$ independently from the past given $I_t$. In the $m$-best arms identification problem, at the end of the $n$ evaluations, the agent selects $m$ arms denoted $J_1, \hdots, J_m$. The objective of the agent is that the set $\{J_1, \hdots, J_m\}$ corresponds to the set of arms with the $m$ highest mean rewards.
\newline

Denote by $\mu_1, \hdots, \mu_K$ the mean of the arms. In the following we assume that $\mu_1 > \hdots > \mu_K$. The ordering assumption comes without loss of generality, and the assumption that the means are all distinct is made for sake of notation (the complexity measures are slightly different if there is an ambiguity for the top $m$ means). We evaluate the performance of the agent's strategy by the probability of misidentification, that is
$$e_n = \P\left( \{J_1, \hdots, J_m\} \neq \{1, \hdots, m\} \right).$$
Finer measures of performance can be proposed, such as the simple regret $r_n = \sum_{i=1}^m (\mu_i - \E \mu_{J_i})$. However, as it was argued in \cite{ABM10}, for a first order analysis it is enough to focus on the quantity $e_n$. 
\newline

In the (single) best arm identification, \cite{ABM10} introduced the following complexity measures. Let $\Delta_i = \mu_1 - \mu_i$ for $i \neq 1$, $\Delta_1 = \mu_1 - \mu_2$, 
$$H_1=\sum_{i=1}^{K} \frac{1}{\Delta_i^{2}} \qquad \mbox{and} \qquad H_2 = \max_{i \in \{1,\hdots,K\}} i \Delta_{i}^{-2}.$$
It is easy to see that these two complexity measures are equivalent up to a logarithmic factor since we have (see \cite{ABM10})
  \begin{equation} \label{eq:hcd}
  H_2 \leq H_1 \leq \log(2K) H_2.
  \end{equation}
[Theorem 4, \cite{ABM10}] shows that the complexity $H_1$ represents the hardness of the best arm identification problem. However, as far as upper bounds are concerned, the quantity $H_2$ proved to be a useful surrogate for $H_1$. For the $m$-best arms identification problem we define the following gaps and the associated complexity measures:
\begin{eqnarray*}
\Delta_i^{\langle m \rangle} & = & \left\{ \begin{array}{ccc} \mu_i - \mu_{m+1} & \text{if} & i \leq m \\ \mu_m - \mu_{i} & \text{if} & i > m \end{array} \right. , \\
H_1^{\langle m \rangle} &  = & \sum_{i=1}^K \frac{1}{\left(\Delta_i^{\langle m \rangle} \right)^2} , \\
H_2^{\langle m \rangle} & = & \max_{i \in \{1,\hdots,K\}} i \left(\Delta_{(i)}^{\langle m \rangle}\right)^{-2} ,
\end{eqnarray*}
where the notation $(i) \in \{1, \hdots, K\}$ is defined such that $\Delta_{(1)}^{\langle m \rangle} \leq \hdots \leq \Delta_{(K)}^{\langle m \rangle}$.
We conjecture that a similar lower bound to [Theorem 4, \cite{ABM10}] with $H_1$ replaced by $H_1^{\langle m \rangle}$ holds true for the $m$-best arms identification problem. In this paper we shall prove an upper bound on $e_n$ that gets small when $n = \tilde{\cO} \left( H_2^{\langle m \rangle} \right)$ (recall that by \eqref{eq:hcd}, $\tilde{\cO} \left( H_2^{\langle m \rangle} \right) = \tilde{\cO} \left( H_1^{\langle m \rangle} \right)$). This result is derived in Section \ref{sec:mbest}, where we introduce our key algorithmic contribution, the SAR (Successive Accepts and Rejects) algorithm. We also present experiments for this setting in Section \ref{sec:exp}.
\newline

In Section \ref{sec:multi} we consider the framework of multi-bandit introduced in \cite{GGLB11}, where the agent faces $M$ distinct best arm identification problems. For sake of notation we assume that each problem $m \in \{1, \hdots, M\}$ has the same number of arms $K$. We also restrict our attention to the single best arm identification within each problem, but we could deal with $m$-best arms identification within each problem. We denote by $\nu_1(m), \hdots, \nu_K(m)$ the unknown distributions of the arms in problem $m$. We define similarly all the relevant quantities for each problem, that is $\mu_1(m) > \hdots > \mu_K(m), \Delta_1(m), \hdots, \Delta_K(m), H_1(m)$ and $H_2(m)$. Finally we denote by $(i,m)$ the arm $i$ in problem $m$. 
In the multi-bandit best arm identification, the forecaster performs $n$ sequential evaluations of the form $(I_t, m_t) \in \{1, \hdots, K\} \times \{1, \hdots, M\}$. At the end of the $n$ evaluations, the agent selects one arm for each problem, denoted $(J_1, 1), \hdots, (J_M, M)$. The objective of the agent is to find the arm with the highest mean reward in each problem, that is in this setting the probability of misidentification can be written as
$$e_n = \P( \exists m \in \{1, \hdots, M\} : J_m \neq 1 ).$$ 
Following \cite{GGLB11} we introduce the following complexity measure
$$H_1^{[M]}  = \sum_{m=1}^M H_1(m) .$$ 
Again we define a sort of weaker complexity measure by ordering the gaps. 
Let 
$$
	\Delta_{1}^{[M]} \leq \Delta_{2}^{[M]} \leq \cdots \leq \Delta_{MK}^{[M]}
$$ 
be a rearrangement of $\{\Delta_{i}(m) : 1\leq i\leq K, 1\leq m\leq M\}$ in ascending order, and let
$$
	H_2^{[M]} = \max_{k \in \{1, \hdots, M K\}} k \left( \Delta_{k}^{[M]} \right)^{-2}.
$$
We conjecture that a similar lower bound to [Theorem 4, \cite{ABM10}] with $H_1$ replaced by $H_1^{[M]}$ holds true for the multi-bandit best arm identification problem. In this paper we shall prove an upper bound on $e_n$ that gets small when $n = \tilde{O} \left(H_2^{[M]}\right)$ (recall that by \eqref{eq:hcd}, $\tilde{\cO} \left( H_2^{[M]} \right) = \tilde{\cO} \left( H_1^{[M]} \right)$). This result, derived in Section \ref{sec:multi}, builds upon the SAR strategy introduced in Section \ref{sec:mbest}. The improvement with respect to \cite{GGLB11} is that our strategy is parameter-free, while the theoretical Gap-E introduced in \cite{GGLB11} requires the knowledge of $H_1^{[M]}$ to tune its parameter. Moreover the analysis of SAR is much simpler than the one of Gap-E. 
\newline

For each arm $i$ and all time rounds $t \geq 1$, we denote by $T_{i}(t)=\sum_{s=1}^t \mathds{1}_{I_t = i}$ the number of times arm $i$ was pulled from rounds $1$ to $t$, and by $X_{i,1}, X_{i,2}, \ldots, X_{i,T_{i,t}}$ the sequence of associated rewards. Introduce $\wh{\mu}_{i,s}=\frac{1}{s} \sum_{t=1}^s X_{i,t}$ the empirical mean of arm $i$ after $s$ evaluations. Denote by $X_{i,s}(m)$ and $\wh{\mu}_{i,s}(m)$ the corresponding quantities in the multi-bandit problem.

\section{$m$-best arms identification} \label{sec:mbest}
In this section we describe and analyze a new algorithm, called SAR (Sucessive Accepts and Rejects), for the $m$-best arms identification problem, see Figure \ref{fig:SAR} for its precise description. The idea behind SAR is similar to the one for SR (Successive Rejects) that was designed for the (single) best arm identification problem, with the additional feature that SAR sometimes \emph{accepts} an arm because it is confident enough that this arm is among the $m$ top arms. Informally SAR proceeds as follows. First the algorithm divides the time (i.e., the $n$ rounds) in $K-1$ phases. At the end of each phase, the algorithm either accepts the arm with the highest empirical mean or dismisses the arm with the lowest empirical mean, and in both cases the corresponding arm is deactivated. During the next phase, it pulls equally often each active arm. The key to decide whether to accept or reject during a certain phase $k$ is to rely on estimates for the gaps $\Delta_i^{\langle m \rangle}$. More precisely, assume that the algorithm has already accepted $m-m(k)$ arms $J_1, \hdots, J_{m-m(k)}$, i.e. there is $m(k)$ arms left to find. Then, at the end of phase $k$, SAR computes for the $m(k)$ empirical best arms (among the active arms) the distance (in terms of empirical mean) to the $(m(k)+1)^{th}$ empirical best arm among the active arms. On the other hand for the active arms that are not among the $m(k)$ empirical best arms, SAR computes the distance to the $m(k)^{th}$ empirical best arm. Finally SAR deactivates the arm $i_k$ that maximizes these empirical distances. If $i_k$ is currently the empirical best arm, then SAR accepts $i_k$ and sets $m(k+1) = m(k) - 1$, $J_{m-m(k+1)} = i_k$, and otherwise it simply rejects $i_k$. The length of the phases are chosen similarly to what was done for the SR algorithm.

\begin{figure}[t]
\bookbox{

\medskip\noindent
Let $A_1=\{1,\hdots,K\}$, $m(1) = m$, $\LG = \frac{1}{2} + \sum_{i=2}^K \frac{1}{i}$, $n_0=0$ and for $k \in \{1,\hdots, K-1\}$,
  $$
  n_k=\bigg\lceil \frac{1}{\LG} \frac{n-K}{K+1-k} \bigg\rceil.
  $$

\medskip\noindent
For each phase $k=1,2,\ldots,K-1$:
\begin{itemize}
\item[(1)]
For each active arm $i \in A_k$, select arm $i$ for $n_k - n_{k-1}$ rounds.
\item[(2)]
Let $\sigma_k : \{1,\ldots,K+1-k\} \to A_k$ be the bijection that orders the empirical means by $\wh\mu_{\sigma_k(1),n_k} \geq  \wh\mu_{\sigma_k(2),n_k} \geq \cdots \geq \wh\mu_{\sigma_k(K+1-k),n_k}$. For $1\leq r\leq K+1-k$, define empirical gaps
$$
	\wh\Delta_{\sigma_k(r), n_k} = \begin{cases}
		\wh\mu_{\sigma_k(r), n_k} - \wh\mu_{\sigma_k(m(k)+1), n_k} & \text{ if $r \leq m(k)$}\\
		\wh\mu_{\sigma_k(m(k)), n_k} - \wh\mu_{\sigma_k(r), n_k} & \text{ if $r \geq m(k)+1$}\\		
	\end{cases}
$$
\item[(3)]
Let $i_k \in \argmax_{i \in A_k} \wh\Delta_{i, n_k}$ (ties broken arbitrarily). Deactivate arm $i_k$, that is set $A_{k+1} = A_k \setminus \{i_k \}$.
\item[(4)]
If $\wh{\mu}_{i_k, n_k} > \wh{\mu}_{\sigma_k(m(k)+1), n_k}$ then arm $i_k$ is accepted, that is set $m(k+1) = m(k) - 1$ and $J_{m-m(k+1)} = i_k$.
\end{itemize}

\medskip\noindent
Output: The $m$ accepted arms $J_1, \hdots, J_m$.
}
\caption{\label{fig:SAR}
SAR (Successive Accepts and Rejects) algorithm for $m$-best arms identification.}
\end{figure}

\begin{theorem} \label{th:SAR}
The probability of error of SAR in the $m$-best arms identification problem satisfies
  $$e_n \leq 2 K^2 \exp\left(- \frac{n - K}{8 \LG H_2^{\langle m \rangle}} \right).$$
\end{theorem}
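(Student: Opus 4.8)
The plan is to mimic the analysis of SR in \cite{ABM10}: build a single high-probability ``clean'' event on which SAR makes no mistake, and control its complement by a Hoeffding bound matched to the phase lengths $n_k$. As in \cite{ABM10} I would first extend the probability space so that each arm $i$ carries an infinite i.i.d.\ sample stream and $\wh\mu_{i,s}$ is defined for all $s$; since an arm active in phase $k$ has been pulled exactly $n_k$ times, this does not change the run of SAR. For $k\in\{1,\dots,K-1\}$ set $r_k=\tfrac14\Delta_{(K+1-k)}^{\langle m\rangle}$ and define $\xi=\bigcap_{k=1}^{K-1}\bigcap_{i=1}^{K}\{\,|\wh\mu_{i,n_k}-\mu_i|<r_k\,\}$. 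The argument then splits into (a) $\P(\xi^{c})$ is at most the right-hand side of the theorem, and (b) on $\xi$, SAR outputs $\{1,\dots,m\}$; together these give $e_n\le\P(\xi^c)$.

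For (a): by Hoeffding, $\P(|\wh\mu_{i,n_k}-\mu_i|\ge r_k)\le 2\exp(-2n_kr_k^2)$. Using $n_k\ge\frac1{\LG}\frac{n-K}{K+1-k}$ and the defining inequality $K+1-k\le H_2^{\langle m\rangle}\bigl(\Delta_{(K+1-k)}^{\langle m\rangle}\bigr)^2$, one gets $2n_kr_k^2=\tfrac18 n_k\bigl(\Delta_{(K+1-k)}^{\langle m\rangle}\bigr)^2\ge\frac{n-K}{8\LG H_2^{\langle m\rangle}}$, and a union bound over the $K$ arms and $K-1$ phases gives $\P(\xi^c)\le 2K(K-1)\exp\!\bigl(-\frac{n-K}{8\LG H_2^{\langle m\rangle}}\bigr)\le 2K^2\exp\!\bigl(-\frac{n-K}{8\LG H_2^{\langle m\rangle}}\bigr)$. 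I would also record, exactly as for SR and exploiting the choice $\LG=\frac12+\sum_{i=2}^K\frac1i$, that $\sum_{k=1}^{K-1}(K+1-k)(n_k-n_{k-1})\le n$, so SAR respects its budget.

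For (b) I would induct on the phase $k$ with the invariant ``through phase $k-1$ every accept/reject was correct''. The invariant forces $A_k$ to consist of the $m(k)$ still-unaccepted arms of $\{1,\dots,m\}$ together with $K+1-k-m(k)$ arms of $\{m+1,\dots,K\}$; since the worst active arm of $\{1,\dots,m\}$ has mean $\ge\mu_m$ and the best active arm of $\{m+1,\dots,K\}$ has mean $\le\mu_{m+1}$, the true phase-$k$ gap of each active arm $i$ is at least $\Delta_i^{\langle m\rangle}$. On $\xi$ one first checks two elementary facts: $\wh\mu_{\sigma_k(m(k)+1),n_k}<\mu_{m+1}+r_k$ (among the $m(k)+1$ empirically best active arms, at least one lies in $\{m+1,\dots,K\}$) and $\wh\mu_{\sigma_k(m(k)),n_k}>\mu_m-r_k$ (the $m(k)$ active arms of $\{1,\dots,m\}$ all have empirical mean above $\mu_m-r_k$). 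Combined with $\Delta_i^{\langle m\rangle}\ge 4r_k$, these yield $\wh\Delta_{i,n_k}>2r_k$ for every active arm with that large a gap. Since only $k-1$ arms have been deactivated but at least $k$ arms satisfy $\Delta_i^{\langle m\rangle}\ge\Delta_{(K+1-k)}^{\langle m\rangle}=4r_k$, some such arm $j^\star$ is still active, so the deactivated arm $i_k=\argmax_i\wh\Delta_{i,n_k}$ obeys $\wh\Delta_{i_k,n_k}\ge\wh\Delta_{j^\star,n_k}>2r_k$. Finally I would bootstrap on this bound: if $i_k$ were misclassified---a $\{1,\dots,m\}$-arm placed outside the empirical top $m(k)$, or a $\{m+1,\dots,K\}$-arm placed inside it---then $\wh\Delta_{i_k,n_k}>2r_k$ together with $|\wh\mu_{i_k,n_k}-\mu_{i_k}|<r_k$ would push the empirical boundary mean $\wh\mu_{\sigma_k(m(k)),n_k}$ above $\mu_m+r_k$ (resp.\ $\wh\mu_{\sigma_k(m(k)+1),n_k}$ below $\mu_{m+1}-r_k$), which by a pigeonhole argument on the $m(k)$ active arms of $\{1,\dots,m\}$ would force some arm of $\{m+1,\dots,K\}$ to have empirical mean above $\mu_m$ (resp.\ some arm of $\{1,\dots,m\}$ below $\mu_{m+1}$)---impossible on $\xi$. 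Hence $i_k$ is correctly classified, the invariant is restored, and after $K-1$ phases (with the convention that the single surviving active arm is kept whenever strictly fewer than $m$ accepts occurred) the output is exactly $\{1,\dots,m\}$.

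The routine parts are the Hoeffding/union bound and the budget check; the real work is the order-statistics step in (b). The delicate point is that for small $k$ the radius $r_k$ can exceed $\mu_m-\mu_{m+1}$, so the empirical top $m(k)$ of the active arms need \emph{not} coincide with the true top $m(k)$---the empirical order can be scrambled across the $m$/$(m{+}1)$ boundary---so one cannot simply argue that the two orderings agree. The fix is to route everything through the provably large-gap survivor $j^\star$ and the inequality $\wh\Delta_{i_k,n_k}>2r_k$, which pins the decision about $i_k$ even when the rest of the ordering is unreliable. A secondary nuisance is the terminal bookkeeping ensuring exactly $m$ arms end up accepted.
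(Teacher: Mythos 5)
Your proof is correct and takes essentially the same route as the paper: the same clean event $\xi$ with radius $\tfrac14\Delta_{(K+1-k)}^{\langle m\rangle}$, the same Hoeffding-plus-union bound yielding the $2K^2\exp(-\frac{n-K}{8\LG H_2^{\langle m\rangle}})$ term, and the same induction over phases using the pigeonhole that only $k-1$ arms have been deactivated while at least $k$ arms have gap $\geq\Delta_{(K+1-k)}^{\langle m\rangle}$. The only differences are organizational (you lower-bound $\wh\Delta_{i_k,n_k}$ via a surviving large-gap arm $j^\star$ and then rule out misclassification, where the paper assumes an error and contradicts the pigeonhole directly) plus a harmless loose phrase at the end ("empirical mean above $\mu_m$" where the contradiction with $\xi$ really needs the bound $\mu_m+r_k\geq\mu_{m+1}+r_k$ that you had already derived); your explicit handling of the terminal bookkeeping is in fact more careful than the paper's.
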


\begin{proof}
Consider the event $\xi$ defined by
$$
	\xi = \left\{\forall i \in \{1, \hdots, K\}, k \in \{1, \hdots, K-1\}, \left|\frac{1}{n_k}\sum_{s=1}^{n_k} X_{i,s} - \mu_i \right| \leq \frac{1}{4} \Delta_{(K+1-k)}^{\langle m \rangle}\right\}.
$$
By Hoeffding's Inequality and an union bound, the probability of the complementary event $\bar\xi$ can be bounded as follows
\begin{align*}
	\mathbb{P}(\bar\xi) & \leq \sum_{i=1}^K \sum_{k=1}^{K-1} \mathbb{P}\left(\left|\frac{1}{n_k}\sum_{s=1}^{n_k} X_{i,s} - \mu_i \right| > \frac{1}{4} \Delta_{(K+1-k)}^{\langle m \rangle}\right)\\
	& \leq \sum_{i=1}^K \sum_{k=1}^{K-1} 2\exp(-2n_k(\Delta_{(K+1-k)}^{\langle m \rangle}/4)^2) \\
	& \leq 2 K^2 \exp\left(- \frac{n - K}{8 \LG H_2^{\langle m \rangle}} \right),
\end{align*}
where the last inequality comes from the fact that
$$
	n_k \left(\Delta_{(K+1-k)}^{\langle m \rangle}\right)^2 \geq \frac{n-K}{\LG (K+1-k) \left(\Delta_{(K+1-k)}^{\langle m \rangle}\right)^{-2}} \geq \frac{n-K}{\LG H_2^{\langle m \rangle}}.
$$
Thus, it suffices to show that on the event $\xi$, the algorithm does not make any error. We prove this by induction on $k$. Let $k\geq 1$. Assume the algorithm makes no error in all previous $k-1$ stages. Note that event $\xi$ implies that at the end of stage $k$, all empirical means are within $\frac{1}{4} \Delta_{(K+1-k)}^{\langle m \rangle}$ of the respective true means.

Let $A_k = \{a_1,\ldots,a_{K+1-k}\}$ be the the set of active arms during phase $k$. We order the $a_i$'s such that $\mu_{a_1} > \mu_{a_2} > \cdots > \mu_{a_{K+1-k}}$. To slightly lighten the notation we denote $m' = m(k)$ for the number of arms that are left to find in phase $k$. The assumption that no error occurs in the first $k-1$ stages implies that 
$$
	a_1, a_2, \ldots, a_{m'} \in \{1,\ldots,m\}, \quad a_{m'+1},\ldots,a_{K+1-k} \in \{m+1,\ldots, K\}.
$$
If an error is made at stage $k$, it can be one of the following two types:
\begin{enumerate}
	\item The algorithm accepts $a_j$ at stage $k$ for some $j\geq m'+1$.
	\item The algorithm rejects $a_j$ at stage $k$ for some $j\leq m'$.
\end{enumerate}
Again to slightly shorten the notation we denote $\sigma = \sigma_k$ for the bijection (from $\{1, \hdots, K+1-k\}$ to $A_k$) such that $\wh\mu_{\sigma(1),n_k} \geq  \wh\mu_{\sigma(2),n_k} \geq \cdots \geq \wh\mu_{\sigma(K+1-k),n_k}$.  Suppose Type 1 error occurs. Then $a_j = \sigma(1)$ since if the algorithm accepts, it must accept the empirical best arm. Furthermore we also have 
\begin{equation}
	\wh\mu_{a_j,n_k} - \wh\mu_{\sigma(m'+1),n_k} \geq \wh\mu_{\sigma(m'),n_k} - \wh\mu_{\sigma(K+1-k),n_k} ,
	\label{3.1}
\end{equation}
since otherwise the algorithm would rather reject arm $\sigma(K+1-k)$. The condition $a_j = \sigma(1)$ and the event $\xi$ implies that 
\begin{align*}
	\wh\mu_{a_j, n_k} \geq \wh\mu_{a_1, n_k} \quad &\Rightarrow \quad \mu_{a_j} + \frac{1}{4}\Delta_{(K+1-k)}^{\langle m \rangle} \geq \mu_{a_1} - \frac{1}{4}\Delta_{(K+1-k)}^{\langle m \rangle}\\
	& \Rightarrow \quad \Delta_{(K+1-k)}^{\langle m \rangle} > \frac{1}{2}\Delta_{(K+1-k)}^{\langle m \rangle} \geq \mu_{a_1} - \mu_{a_j} \geq \mu_{a_1} - \mu_{m+1}
\end{align*}
We then look at the condition \eqref{3.1}. In the event of $\xi$, for all $i\leq m'$, we have 
$$
	\wh\mu_{a_i, n_k} \geq \mu_{a_i} - \frac{1}{4}\Delta_{(K+1-k)}^{\langle m \rangle} \geq \mu_{a_{m'}} - \frac{1}{4}\Delta_{(K+1-k)}^{\langle m \rangle} \geq \mu_m - \frac{1}{4}\Delta_{(K+1-k)}^{\langle m \rangle}. 
$$
So there are $m+1$ arms in $A_k$ (namely $a_1, a_2, \ldots, a_{m'}, a_j$) whose empirical means are at least $\mu_m - \frac{1}{4}\Delta_{(K+1-k)}^{\langle m \rangle}$, which means 
$
	\wh\mu_{\sigma(m'+1),n_k} \geq \mu_m- \frac{1}{4}\Delta_{(K+1-k)}^{\langle m \rangle}.
$
On the other hand, $\wh\mu_{\sigma(K+1-k),n_k} \leq \wh\mu_{a_{K+1-k}, n_k} \leq \mu_{a_{K+1-k}} + \frac{1}{4}\Delta_{(K+1-k)}^{\langle m \rangle}$. Therefore, using those two observations and \eqref{3.1} we deduce
\begin{align*}
	& \left(\mu_{a_j} + \frac{1}{4}\Delta_{(K+1-k)}^{\langle m \rangle}\right) - \left(\mu_m - \frac{1}{4}\Delta_{(K+1-k)}^{\langle m \rangle}\right) \\
	& \hspace{4cm} \geq \left (\mu_m - \frac{1}{4}\Delta_{(K+1-k)}^{\langle m \rangle}\right) - \left(\mu_{a_{K+1-k}} + \frac{1}{4}\Delta_{(K+1-k)}^{\langle m \rangle}\right) \\
	& \Rightarrow \quad \Delta_{(K+1-k)}^{\langle m \rangle} \geq 2\mu_m - \mu_{a_j} - \mu_{a_{K+1-k}} > \mu_m - \mu_{a_{K+1-k}}.
\end{align*}
Thus so far we proved that if there is a Type 1 error, then
$$\Delta_{(K+1-k)}^{\langle m \rangle} > \max(\mu_{a_1} - \mu_m, \mu_m - \mu_{a_{K+1-k}}) .$$
But at stage $k$, only $k-1$ arms have been accepted or rejected, thus $\Delta_{(K+1-k)}^{\langle m \rangle} \leq \max(\mu_{a_1} - \mu_m, \mu_m - \mu_{a_{K+1-k}})$. By contradiction, we conclude that Type 1 error does not occur.

Suppose Type 2 error occurs. The reasoning is symmetric to Type 1. In fact, if we rephrase the problem as finding the $K-m$ worst arms instead of the $m$ best arms, this is exactly the same as Type 1 error. Hence Type 2 error cannot occur as well. This completes the induction and consequently the proof of the theorem.
\end{proof}

\section{Multi-bandit best arm identification} \label{sec:multi}
In this section we use the idea of SAR for multi-bandit best arm identification. Here at the end of each phase we estimate the gaps $\Delta_i(m)$ within each problem, and we reject the arm with the largest such estimated gap. Moreover if a problem is left with only one active arm, then this arm is accepted and the problem is deactivated. The corresponding strategy is described precisely in Figure \ref{fig:SAR2}

\begin{figure}[t]
\bookbox{

\medskip\noindent
Let $A_1=\{(1,1), \hdots,(K,M) \}$, $\LGG = \frac12 + \sum_{i=2}^{M K} \frac{1}{i}$, $n_0=0$ and for $k \in \{1,\hdots, M K- 1\}$,
  $$
  n_k=\bigg\lceil \frac{1}{\LGG} \frac{n-M K}{M K+1-k} \bigg\rceil.
  $$

\medskip\noindent
For each phase $k=1,2,\ldots,MK-1$:
\begin{itemize}
\item[(1)]
For each active pair (arm, problem) $(i,m) \in A_k$, select arm $i$ in problem $m$ for $n_k - n_{k-1}$ rounds.
\item[(2)]
Let $h_k(m)$ be the arm with the highest empirical mean $\wh{\mu}_{i, n_k}(m)$ among the active arms in the active problem $m$ (that is such that $(i,m) \in A_k$).
\item[(3)] If there is a problem $m$ such that $h_k(m)$ is the last active arm in problem $m$, then deactivate both the arm and the problem, and accept the arm. That is, set $A_{k+1} = A_k \setminus \{(h_k(m), m)\}$ and $J_{m} = h_k(m)$. Otherwise proceed to step (4).
\item[(4)]
Let $(i_k,m_k) \in \argmax_{(i,m) \in A_k} \left(\wh{\mu}_{h_k(m), n_k}(m) - \wh{\mu}_{i, n_k}(m)\right)$ (ties broken arbitrarily). Deactivate arm $i_k$ in problem $m_k$, that is set $A_{k+1} = A_k \setminus \{(i_k, m_k)\}$.
\end{itemize}

\medskip\noindent
Output: The $M$ accepted arms $(J_1,1), \hdots, (J_M,M)$ (where the last accepted arm is defined by the unique element of $A_{M K}$).
}
\caption{\label{fig:SAR2}
SAR (Successive Accepts and Rejects) algorithm for the multi-bandit best arm identification.}
\end{figure}

\begin{theorem} \label{th:SAR2}
The probability of error of SAR in the multi-bandit best arm identification problem satisfies
  $$e_n \leq 2M^2K^2\exp\left(-\frac{n-MK}{8\overline{\log}(MK)H_2^{[M]}}\right).$$
\end{theorem}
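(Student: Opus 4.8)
The plan is to mirror the proof of Theorem~\ref{th:SAR}. First I introduce the favourable event
$$
\xi = \Bigl\{\, \forall (i,m),\ \forall k \in \{1,\hdots,MK-1\},\ \Bigl| \tfrac{1}{n_k}\textstyle\sum_{s=1}^{n_k} X_{i,s}(m) - \mu_i(m) \Bigr| \leq \tfrac14 \Delta_{MK+1-k}^{[M]} \,\Bigr\}.
$$
By Hoeffding's inequality and a union bound over the $MK$ arm--problem pairs and the $MK-1$ phases, together with $n_k \bigl(\Delta_{MK+1-k}^{[M]}\bigr)^2 \geq \frac{n-MK}{\LGG\,(MK+1-k)(\Delta_{MK+1-k}^{[M]})^{-2}} \geq \frac{n-MK}{\LGG\, H_2^{[M]}}$ — which is immediate from the definition of $n_k$ and of $H_2^{[M]}$ — one obtains $\P(\bar\xi) \leq 2M^2K^2 \exp\bigl(-\frac{n-MK}{8\LGG\, H_2^{[M]}}\bigr)$. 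Thus it suffices to show that on $\xi$ the algorithm makes no error, which I do by induction on the phase index $k$.

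Suppose no error was made in phases $1,\hdots,k-1$. Then every accepted arm so far is the true best arm of its problem, and every still-active problem $m$ still contains its best arm $(1,m)$: it was not rejected (induction hypothesis) and not accepted (an acceptance deactivates the whole problem). An error at phase $k$ is of one of two types. Type~1: at step (3) an arm $h_k(m)\neq 1$ is accepted for some problem $m$; but then $(1,m)$ was deactivated earlier, necessarily by a rejection at step (4), which is a Type~2 error at that earlier phase — contradicting the induction hypothesis. The same reasoning applied over all $MK-1$ phases also rules out a wrong arm in the final singleton $A_{MK}$. Type~2: at step (4) the rejected arm $i_k$ in problem $m_k$ is arm $1$. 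Writing $V = \max_{(i,m)\in A_k}\bigl(\wh\mu_{h_k(m),n_k}(m)-\wh\mu_{i,n_k}(m)\bigr)$ for the maximum attained at $(i_k,m_k)$, I claim $V < \tfrac12 \Delta_{MK+1-k}^{[M]}$: indeed $V = \wh\mu_{h_k(m_k),n_k}(m_k)-\wh\mu_{1,n_k}(m_k)$, which is $0$ if $h_k(m_k)=1$, and otherwise on $\xi$ it is at most $\mu_{h_k(m_k)}(m_k)-\mu_1(m_k)+\tfrac12 \Delta_{MK+1-k}^{[M]} < \tfrac12 \Delta_{MK+1-k}^{[M]}$ since arm $1$ is strictly best in problem $m_k$.

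It remains to contradict this by exhibiting an active pair with empirical gap at least $\tfrac12 \Delta_{MK+1-k}^{[M]}$. Only $k-1$ pairs have been deactivated, so among the $MK+1-k$ active pairs not all can have $\Delta_i(m)$ strictly below the $(MK+1-k)$-th smallest gap $\Delta_{MK+1-k}^{[M]}$; hence some active pair has $\Delta_i(m)\geq \Delta_{MK+1-k}^{[M]}$. Since step (4) was reached, every active problem has at least two active arms, and since $\Delta_1(m)$ is the smallest gap within problem $m$, the maximal $\Delta_i(m)$ over active pairs is attained at some $(i^\ast,m^\ast)$ with $i^\ast\neq 1$, so $\mu_1(m^\ast)-\mu_{i^\ast}(m^\ast)=\Delta_{i^\ast}(m^\ast)\geq \Delta_{MK+1-k}^{[M]}$. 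On $\xi$, using that $h_k(m^\ast)$ is the empirical best active arm of problem $m^\ast$,
$$
\wh\mu_{h_k(m^\ast),n_k}(m^\ast) - \wh\mu_{i^\ast,n_k}(m^\ast) \geq \wh\mu_{1,n_k}(m^\ast) - \wh\mu_{i^\ast,n_k}(m^\ast) \geq \Delta_{i^\ast}(m^\ast) - \tfrac12\Delta_{MK+1-k}^{[M]} \geq \tfrac12 \Delta_{MK+1-k}^{[M]} > V,
$$
contradicting the definition of $V$. Hence no Type~2 error occurs, which closes the induction. The two points I expect to require the most care are securing the strict inequality $V < \tfrac12\Delta_{MK+1-k}^{[M]}$ (so that arbitrary tie-breaking in step (4) cannot rescue a rejection of an optimal arm), and the bookkeeping forced by the two kinds of deactivation — in particular noticing that, because $\Delta_1(m)=\Delta_2(m)$ within each problem, the largest active gap is always witnessed by a suboptimal arm of a problem with at least two active arms.
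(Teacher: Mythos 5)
Your proof is correct and follows essentially the same route as the paper's: the same concentration event $\xi$ with tolerance $\tfrac14\Delta^{[M]}_{MK+1-k}$, the same Hoeffding-plus-union-bound computation via $n_k(\Delta^{[M]}_{MK+1-k})^2 \geq \tfrac{n-MK}{\LGG H_2^{[M]}}$, and the same induction contrasting the small empirical gap of a wrongly rejected optimal arm with the large empirical gap of an active pair whose true gap is at least $\Delta^{[M]}_{MK+1-k}$. If anything you are more careful than the paper on two points it passes over silently — ruling out wrong acceptances via the induction hypothesis, and using $\Delta_1(m)=\Delta_2(m)$ to guarantee the large-gap witness can be taken with $i^\ast\neq 1$ in a problem that still has two active arms.
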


\begin{proof}
Consider the event $\xi$ defined by
\begin{align*}
	\xi = \biggl\{
		\forall \ 1\leq i\leq K, & \ 1\leq m\leq M, \ 1\leq k \leq M K -1\\
		&\left|\frac{1}{n_k} \sum_{s=1}^{n_k}X_{i,s}(m) - \mu_{i}(m)\right| \leq \frac{1}{4}\Delta_{(MK+1-k)}
	\biggr\}.
\end{align*}
Following the same reasoning than in the proof of Theorem \ref{th:SAR}, it suffices to show that in the event of $\xi$ the algorithm makes no error. We do this by induction on the phase $k$ of the algorithm. Let $k\geq 1$. Assume the algorithm makes no error in all previous $k-1$ stages. Then at phase $k$, for all active problem $m$, the arm $(1,m)$ is still active. Moreover, as only $k-1$ arms have been deactivated, one clearly has
$$
	\max_{(i,m)\in A_k} (\mu_1(m) - \mu_i(m)) \geq \Delta_{(MK+1-k)}.
$$
Suppose the above maximum is achieved for the arm $(i^*, m^*)$, so we have
\begin{equation}
\label{4.1}
	\mu_1(m^*) - \mu_{i^*}(m^*) \geq \Delta_{(MK+1-k)}.
\end{equation}
Assume now that the algorithm makes an error at the end of phase $k$, i.e. some arm $(1,m)$ is deactivated and it was not the last active arm in problem $m$. For this to happen, we necessarily have for some $j \in \{2, \hdots, K\}$ (e.g., $j = h_k(m)$),
\begin{equation}
	\wh \mu_{j,n_k}(m) - \wh \mu_{1,n_k}(m) \geq \wh \mu_{1, n_k}(m^*) - \wh \mu_{i^*, n_k}(m^*). \label{4.2}
\end{equation}
Clearly on the event $\xi$ one has
\begin{align*}
&	\wh \mu_{j,n_k}(m) - \wh \mu_{1,n_k}(m) \\
& =  \wh \mu_{j,n_k}(m) - \mu_j(m) + \mu_j(m) - \mu_1(m) +  \mu_1(m) - \wh \mu_{1,n_k}(m) \\
	 & < \frac{1}{2}\Delta_{(MK+1-k)}.
\end{align*}
On the other hand, using \eqref{4.1} and $\xi$, one has
\begin{align*}
	&\wh \mu_{1,n_k}(m^*) - \wh \mu_{i^*,n_k}(m^*) \\
&= \wh \mu_{1,n_k}(m^*) - \mu_{1}(m^*) + \mu_{1}(m^*) - \mu_{i^*}(m^*) + \mu_{i^*}(m^*) - \wh \mu_{i^*,n_k}(m^*) \\
	 &\geq \frac{1}{2}\Delta_{(MK+1-k)}.
\end{align*}
Therefore, $\wh \mu_{1,n_k}(m^*) - \wh \mu_{i^*,n_k}(m^*) > \wh \mu_{j,n_k}(m) - \wh \mu_{1,n_k}(m)$, contradicting \eqref{4.2}. This completes the induction and the proof.
\end{proof}

\section{Experiments} \label{sec:exp}
In this section we revisit the simple experiments of \cite{ABM10} in the setting of multiple identifications. Since our objective is simply to illustrate our theoretical analysis we focus on the $m$-best arms identification problem, but similar numerical simulations could be conducted in the multi-bandit setting and compared to the results of \cite{GGLB11}.

We compare our proposed strategy SAR to three competitors: The uniform sampling strategy that divides evenly the allocation budget $n$ between the $K$ arms, and then return the $m$ arms with the highest empirical mean (see \cite{BMS11} for a discussion of this strategy in the single best arm identification). The SR strategy is the plain Successive Rejects strategy of \cite{ABM10} which was designed to find the (single) best arm. We slightly improve it for $m$-best identification by running only $K-m-1$ phases (while still using the full budget $n$) and then returning the last $m$ surviving arms. Finally we consider the extension of UCB-E to the $m$-best arms identification problem, which is based on a similar idea than the extension Gap-E of \cite{GGLB11} for the multi-bandit best arm identification, see Figure \ref{fig:GapE} for the details. Note that this last algorithm requires to know the complexity $H_1^{\langle m \rangle}$. One could propose an adaptive version, using ideas described in \cite{ABM10}, but for sake of simplicity we restrict our attention to the non-adaptive algorithm.

\begin{figure}[t] 
\bookbox{\small
Parameter: exploration parameter $c>0$.

\medskip\noindent
For each round $t=1,2,\ldots,n$:
\begin{itemize}
\item[(1)]
Let $\sigma_t$ be the permutation of $\{1, \hdots, K\}$ that orders the empirical means, i.e., $\wh\mu_{\sigma_t(1),T_{\sigma_t(1)}(t-1)} \geq \wh\mu_{\sigma_t(2),T_{\sigma_t(2)}(t-1)} \geq \cdots \geq \wh\mu_{\sigma_t(K),T_{\sigma_t(K)}(t-1)}$. For $1\leq r \leq K$, define the empirical gaps
$$
	\wh\Delta_{\sigma_t(r), t} = \begin{cases}
		\wh\mu_{\sigma_t(r), T_{\sigma_t(r)}(t-1)}  - \wh\mu_{\sigma_t(m+1), T_{\sigma_t(m+1)}(t-1)}  & \text{ if $r \leq m$}\\
		\wh\mu_{\sigma_t(m), T_{\sigma_t(m)}(t-1)} - \wh\mu_{\sigma_t(r), T_{\sigma_t(r)}(t-1)} & \text{ if $r \geq m+1$}\\		
	\end{cases}
$$
\item[(2)]
Draw $$I_t \in \argmax_{i \in \{1,\hdots,K\}} - \wh \Delta_{i, t} + c \sqrt{\frac{n / H_1^{\langle m \rangle}}{T_i(t-1)}} .$$
\end{itemize}

\medskip\noindent
Let $J_1, \hdots, J_m$ be the $m$ arms with highest empirical means $\wh{\mu}_{i,T_i(n)}$.
}
\caption{\label{fig:GapE}
Gap-E algorithm for the $m$-best arms identification problem.}
\end{figure}

In our experiments we consider only Bernoulli distributions, and the optimal arm always has parameter $1/2$. Each experiment corresponds to a different situation for the gaps, they are either clustered in few groups, or distributed according to an arithmetic or geometric progression. For each experiment we plot the probability of misidentification for each strategy, varying $m$ between $2$ and $K-1$. The allocation budget for each experiment is chosen to be roughly equal to $\max_{1 \leq m \leq K-1} H_1^{\langle m \rangle}$. We report our results in Figure \ref{fig:exp}. The parameters for the experiments are as follows:
\begin{itemize}
	\item Experiment 1: One group of bad arms, $K=20$, $\mu_{2:20} = 0.4$ (meaning for any $j\in\{2,\dots,20\}, \mu_j=0.4$)
	\item Experiment 2: Two groups of bad arms, $K=20$, $\mu_{2:6} = 0.42$, $\mu_{7:20}=0.38$.
	\item Experiment 3: Geometric progression, $K=4$, $\mu_{i} = 0.5 - (0.37)^i$, $i \in \{2,3,4\}$.
	\item Experiment 4: $6$ arms divided in three groups, $K=6$, $\mu_{2} = 0.42$, $\mu_{3:4}=0.4$, $\mu_{5:6}=0.35$.
	\item Experiment 5: Arithmetic progression, $K=15$, $\mu_{i} = 0.5 - 0.025 i$, $i \in \{2,\hdots,15\}$.
	\item Experiment 6: Three groups of bad arms, $K=30$, $\mu_{2:6}=0.45$, $\mu_{7:20}=0.43$, $\mu_{21:30}=0.38$.
\end{itemize}
It is interesting to note that SR performs badly for $m$-best arms identification when $m>1$, as it has even worse performances than the naive uniform sampling in many cases. This shows that the tradeoffs involved in finding the single best arm and finding the top $m$ arms are fundamentally different. As expected SAR always outperforms uniform sampling, and Gap-E has slightly better performances than SAR (but Gap-E requires an extra information to tune its parameter, and the adapative version comes with no provable guarantee).

\begin{figure}[t]
\begin{center}
\includegraphics[scale = 0.52]{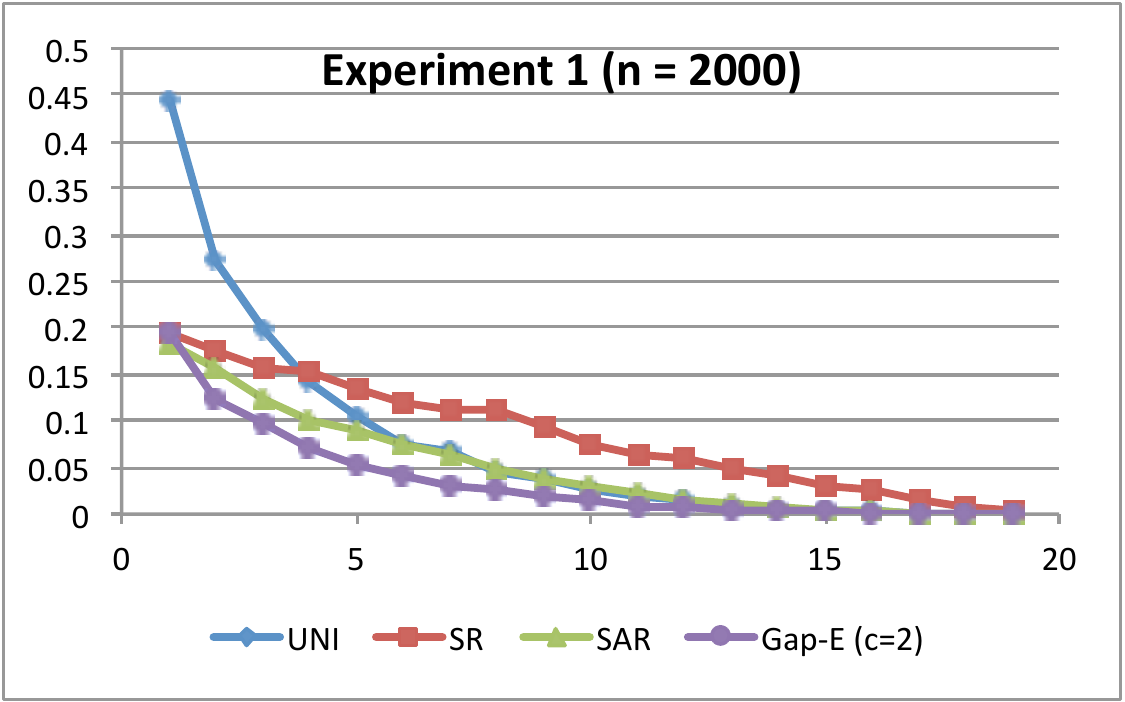}\hfill \includegraphics[scale = 0.52]{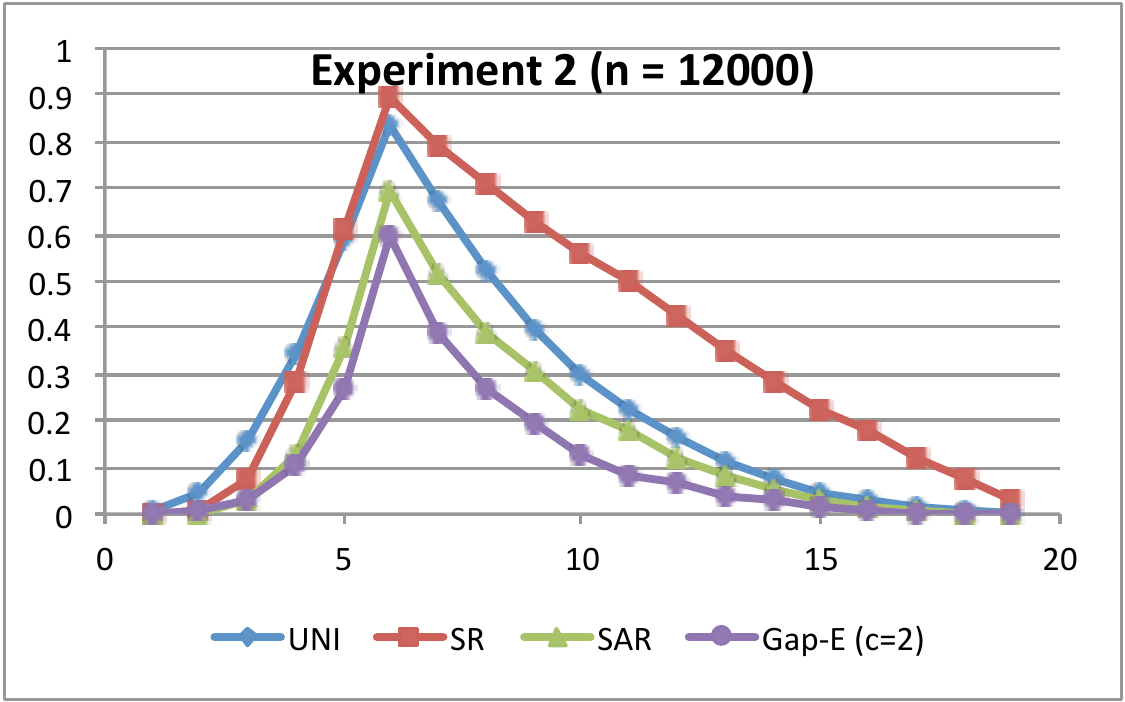}\\
\vspace{0.1in}
\includegraphics[scale = 0.52]{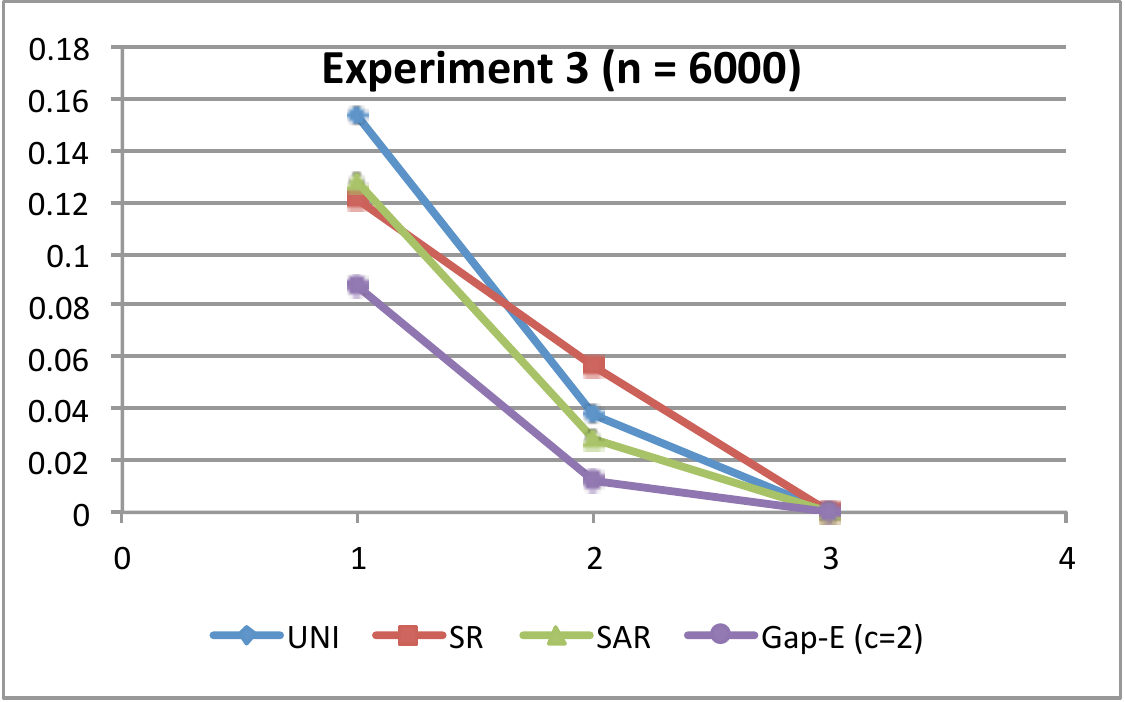}\hfill \includegraphics[scale = 0.52]{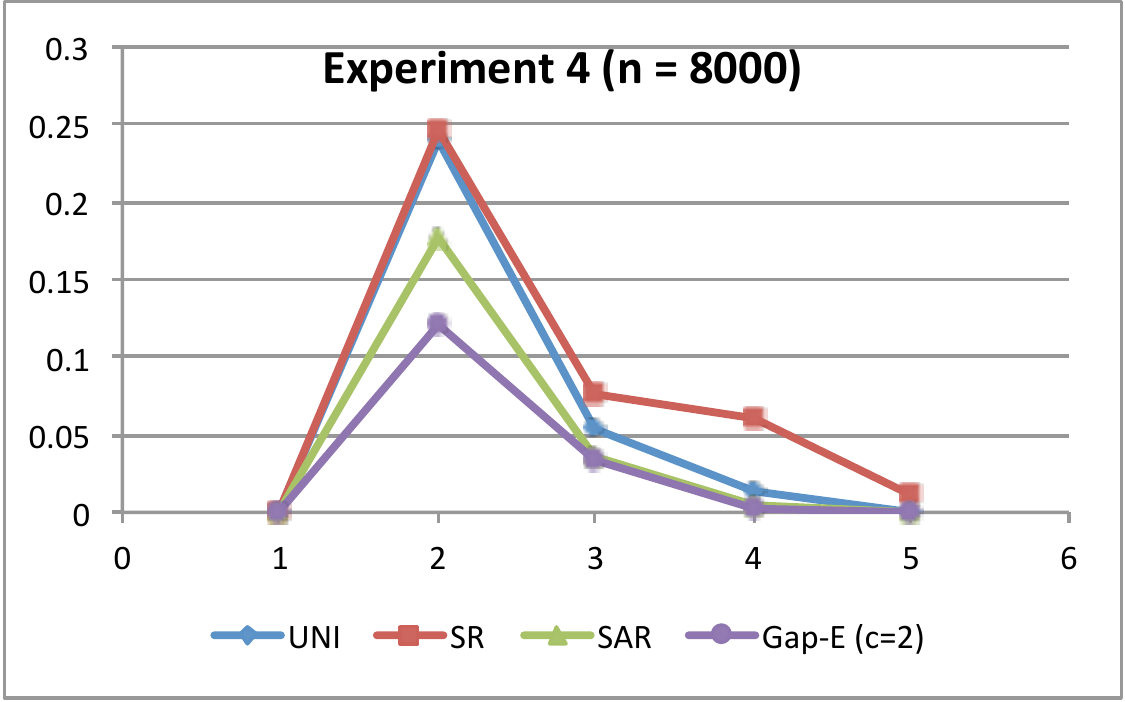}\\
\vspace{0.1in}
\includegraphics[scale = 0.52]{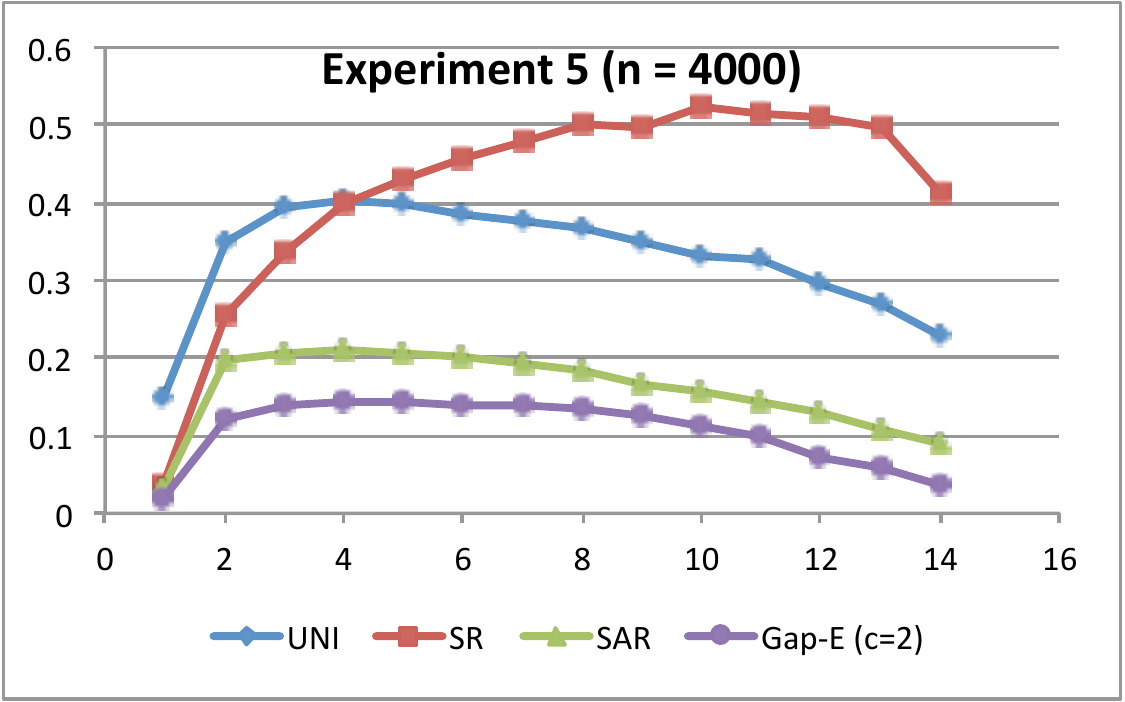}\hfill \includegraphics[scale = 0.52]{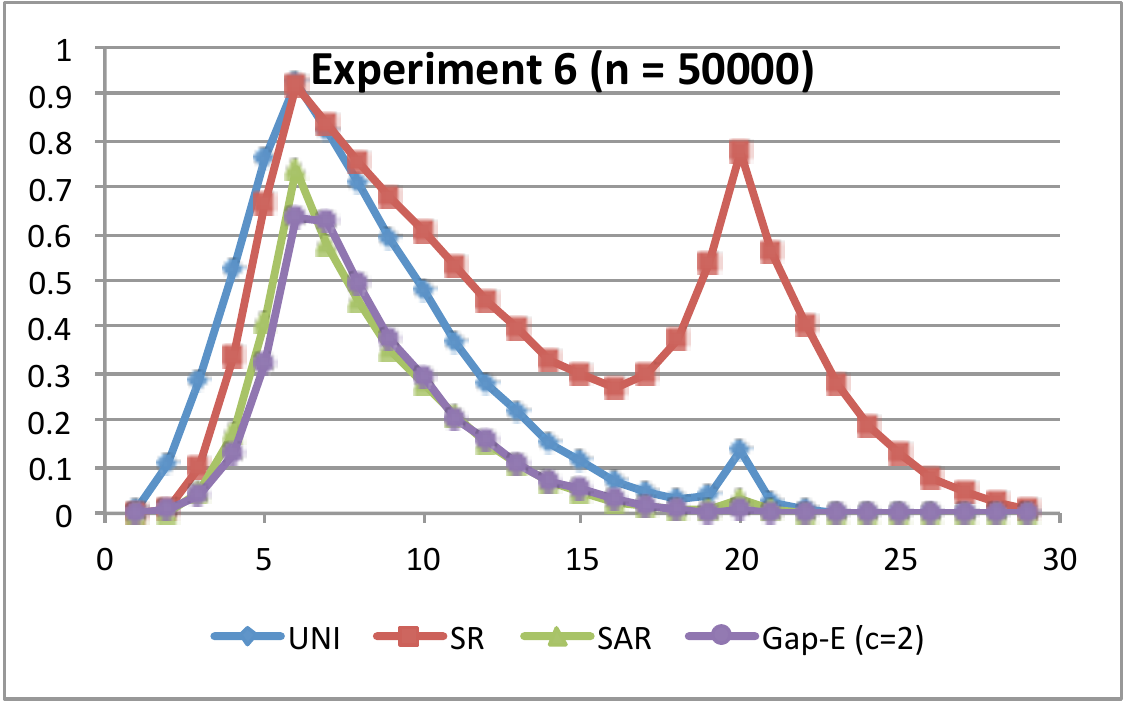}\\
\end{center}
\caption{Numerical simulations for the $m$-best arms identification problem. We chose $c = 2$ (exploration parameter) for the Gap-E algorithm in all experiments.}
\label{fig:exp}
\end{figure}

\bibliographystyle{plainnat}
\bibliography{newbib}

\end{document}